\def\eqref#1{equation~\ref{#1}}
\def\1{\bm{1}}
\DeclareMathAlphabet{\mathsfit}{\encodingdefault}{\sfdefault}{m}{sl}
\SetMathAlphabet{\mathsfit}{bold}{\encodingdefault}{\sfdefault}{bx}{n}
\def\gD{{\mathcal{D}}}
\def\gL{{\mathcal{L}}}
\newcommand{\E}{\mathbb{E}}
\newcommand{\R}{\mathbb{R}}
\newtheorem{theorem}{Theorem}
\title{Understanding SOAP from the Perspective of Gradient Whitening}
\author{Yanqing Lu, Letao Wang \& Jinbo Liu \thanks{Correspondence to Yanqing Lu: $<$\texttt{\href{mailto:ylu62702@usc.edu}{ylu62702@usc.edu}}$>$.}\\
Department of Computer Science\\
University of Southern California\\
}
\begin{document}

\maketitle


\begin{abstract}
Shampoo with Adam in the Preconditioner’s eigenbasis (SOAP) has recently emerged as a promising optimization algorithm for neural network training, achieving superior training efficiency over both Adam and Shampoo in language modeling tasks. In this work, we analyze Adam, Shampoo, and SOAP from the perspective of gradient whitening, interpreting their preconditioners as approximations to the whitening matrix, which captures second-order curvature information. We further establish a theoretical equivalence between idealized versions of SOAP and Shampoo under the Kronecker product assumption. To empirically evaluate these insights, we reproduce the language modeling experiments using nanoGPT and grayscale image colorization. Our results show that SOAP exhibits similar convergence rate as Shampoo, and no significant advantage over both Adam and Shampoo in the final loss achieved, which aligns with their equivalence in theory.
\end{abstract}

\section{Introduction}
As deep learning models continue to scale in size and complexity, improving optimization efficiency has become a critical concern. Recent advances have introduced ShampoO with Adam in the Preconditioner’s eigenbasis (SOAP) \citep{vyas2024soap}, an optimization algorithm that runs Adam in the rotated parameter space provided by Shampoo. Empirical results in language modeling tasks demonstrate that SOAP outperforms both Adam \citep{kingma2014adam} and Shampoo \citep{gupta2018shampoo} in terms of training efficiency. To better understand the design of SOAP, we analyze it through the lens of \textit{gradient whitening}. In this view, the preconditioners used by Adam, Shampoo, and SOAP are interpreted as different approximations of the whitening matrix, which captures second-order curvature information of model parameters during training. Crucially, we establish a theoretical connection between SOAP and Shampoo. Building on these insights, we reproduce the language modeling experiment using nanoGPT to benchmark SOAP against Adam and Shampoo, and further extend our evaluation to computer vision tasks to assess SOAP’s effectiveness in optimizing convolutional neural networks.

\section{Notation and Background}
Neural network layers typically have matrix-shaped weights. We denote the weight matrix and the gradient of loss with respect to it by $W\in \R^{m\times n}$ and $G\in \R^{m\times n}$, respectively. Let $g\in \R^{mn}$ be the vectorized gradient, the update direction of adaptive gradient descent algorithms often takes the form of $-H^{-p} g$, where $H\in\R^{mn\times mn}$ is referred to as the preconditioner, and $p\in[0,1]$ is some constant that controls the degree of adaptation.

\textbf{Adam} \citep{kingma2014adam} is a widely used first-order adaptive algorithm that employs a diagonal preconditioner. It maintains exponential moving averages of both the gradient (denoted as $M$) and its element-wise square (denoted as $V$). Given learning rate $\eta$, decay rates $\beta_1$, $\beta_2$ and the current gradient $G$, the update rule of Adam is
\begin{align}\label{eq:adam}
    M \leftarrow \beta_1M + (1-\beta_1)G;\quad V \leftarrow \beta_2V + (1-\beta_2)G^2;\quad W \leftarrow W - \eta M / \sqrt{V}.
\end{align}

\textbf{Shampoo} \citep{gupta2018shampoo} is a second-order optimization method maintaining two preconditioner matrices $L\in\R^{m\times m}$ and $R\in\R^{n\times n}$, which can be viewed as a Kronecker factorization of a full-matrix preconditioner of shape $mn\times mn$. We adopt a modified version of the original Shampoo as suggested in recent works \citep{shi2023distributed, vyas2024soap}. For simplicity, we also omit the exponential moving average (EMA) operations of the preconditioners. The resulting update step is given by
\begin{align}\label{eq:shampoo}
    L \leftarrow GG^T;\quad R \leftarrow G^TG;\quad W \leftarrow W - \eta L^{-\frac{1}{2}}GR^{-\frac{1}{2}}/\text{Trace}(L)^{-\frac{1}{2}}.
\end{align}

\textbf{SOAP} \citep{vyas2024soap} is a second-order optimization method that aims at enhancing the adaptivity of Shampoo in practice. It defines a sequence of operations including transforming the current gradient using the eigenvector matrices of Shampoo’s preconditioners, applying an Adam update to the transformed gradient, and then projecting the result back to the original space to update the weights. Let $Q_L, Q_R\in\R^{m\times n}$ denote the eigenvectors of the preconditioners $L$ and $R$, and let \texttt{Adam}$(G')$ denote the Adam-adapted gradient of $G'$. The overall weight update is given by 
\begin{align}\label{eq:soap}
    G' &\leftarrow Q_L^TGQ_R;\quad N' \leftarrow \texttt{Adam}(G');\quad N \leftarrow Q_LN'Q_R^T;\quad W \leftarrow W - \eta N.
\end{align}

\section{Gradient Whitening}\label{sec:whitening}
Recall that we define $g\in \R^{mn}$ as the vectorized gradient of the training loss with respect to the weight matrix. $g$ can be viewed a random vector where the randomness originates from the input features and labels. Let $\Sigma = \E[gg^T]$ denote the covariance matrix of the gradient vector, and define $g'=\Sigma^{-1/2}g$. $\Sigma$ is a \textbf{whitening matrix}, since one can show that $\E[g'g'^T]=I$. This transformation, known as \textit{gradient whitening} \citep{vyas2024soapmuon}, standardizes and decorrelates the gradient, which intuitively leads to improved convergence during optimization. Furthermore, there exists a theoretical connection between the covariance matrix and the Hessian.

\textbf{Whitening matrix as the Gauss-Newton component of the Hessian.} Let $\gD$ denote the training distribution, and let $g_{x,y}\in\R^{mn}$ represent the vectorized gradient for a single training example $(x,y)\sim \gD$. We also denote the vectorized weight by $w\in\R^{mn}$. Given a neural network output $f(x)$, the loss function is written as $\gL(f(x), y)$. Then the Hessian of the expected loss with respect to $w$ can be decomposed as follows:
\begin{align}\label{eq:hessian}
    \E_{(x,y)\sim \gD} \left[\frac{\partial^2\gL}{\partial w^2}\right] = \E_{(x,y)\sim \gD} \left[\frac{\partial f}{\partial w}^T \frac{\partial^2\gL}{\partial f^2} \frac{\partial f}{\partial w}\right] + \E_{(x,y)\sim \gD} \left[\frac{\partial \gL}{\partial f} \frac{\partial^2 f}{\partial w^2}\right].
\end{align}

The first term on the right-hand side of Equation~\ref{eq:hessian} is known as the Gauss-Newton component of the Hessian, denoted by $H_{\text{GN}}$. This component has been shown to closely resemble the full Hessian during neural network training in terms of their eigenspectra \citep{sankar2021deeper}. Moreover, in the case of cross-entropy loss, the Gauss-Newton component is referred to as the Fisher Information Matrix, which coincides with the whitening matrix \citep{morwani2024new}. Consequently, we have
\begin{align}\label{eq:gn}
    H_{\text{GN}} = \E_{(x,y)\sim \gD} \left[\frac{\partial f}{\partial w}^T \frac{\partial^2\gL}{\partial f^2} \frac{\partial f}{\partial w}\right] = \E_{x, s\sim f(x)}[g_{x,s}g_{x,s}^T] = \Sigma.
\end{align}
Here we explicitly write $x, s\sim f(x)$ to emphasize that the label is sampled from the model’s output distribution, rather than being the true label from the training set, where the latter is typically used in practice. This distinction is analyzed in the context of Shampoo by \citet{morwani2024new}.

\section{Approximating the Whitening Matrix}\label{sec:approx}
Using the whitening matrix as a preconditioner in adaptive optimization algorithms is a promising direction, as it contains rich second-order information about the curvature of the loss landscape, as discussed in Section~\ref{sec:whitening}. However, applying the full whitening matrix is computationally expensive and does not scale well to large deep learning models with billions of parameters. With the objective of scalable yet effective preconditioning, several adaptive methods can be interpreted as using approximations to the whitening matrix, each based on different structural assumptions. 

\textbf{Adam as a diagonal version of whitening.} To better understand Adam in the context of gradient whitening, we rewrite its update of weight from Equation~\ref{eq:adam} in vectorized form. Let $m$ denote the vectorized first-order momentum, and replace the EMA of the squared gradients with the expectation over training distribution. This leads to the following update step:
\begin{align}
    w \leftarrow w - \eta H_{\text{Adam}}^{-\frac{1}{2}} m,
\end{align}
where $H_{\text{Adam}}\in\R^{mn\times mn}$ is a diagonal matrix with entries $(H_{\text{Adam}})_{i,i} = \E(g_i^2)$ for $i=1,2,...,mn$. Observe that $H_{\text{Adam}} = diag(\E[gg^T]) = diag(\Sigma)$. Thus, Adam can be viewed as a gradient whitening optimizer with momentum under the assumption that $\Sigma$ is diagonal, or equivalently the gradients are uncorrelated.

\textbf{Shampoo as the optimal Kronecker product approximation.} Using the following properties of Kronecker product: (1) $(A\otimes B)\text{vec}(G) = \text{vec}(BGA^T)$; (2) $(A\otimes B)^p = A^p \otimes B^p$ \citep{henderson1981vec}, the vectorized version of Shampoo's weight update can be expressed as 
\begin{align}\label{eq:shampoo_vec}
    w \leftarrow w - \eta H_{\text{Shampoo}}^{-\frac{1}{2}}g = w - \eta ((R\otimes L)/\text{Trace}(L))^{-\frac{1}{2}} g.
\end{align}
Assuming that the whitening matrix $\Sigma$ can be factorized as a Kronecker product, its explicit form based on the analysis of \citet{morwani2024new} is given by:
\begin{align}\label{eq:kronecker}
    \Sigma = (\E[G^TG]\otimes \E[GG^T])/\text{Trace}(\E[GG^T]) \approx H_{\text{Shampoo}}.
\end{align}
From Equation~\ref{eq:gn} we know that $\Sigma$ corresponds to the Gauss-Newton component of the Hessian $H_{\text{GN}}$ when the expectation is taken over the training distribution of $x$ with sampled label. In practice, the preconditioner of Shampoo differs from the middle term of Equation~\ref{eq:kronecker} due to the following three modifications: (1) the per-input gradient is replaced by the \textit{batch gradient}, as models are typically trained using mini-batch sampling; (2) sampled labels are replaced with real labels; (3) EMAs of $L, R$, and $G$ are maintained, rather than the exact expectations over the training distribution. 

While these approximations may introduce some deviation from the theoretical whitening matrix, prior work shows that their impact on optimization performance is often negligible \citep{grosse2021adaptive, osawa2023asdl}. Therefore, Shampoo can still be viewed as a gradient whitening optimizer with the Kronecker product assumption.

\section{Equivalence between SOAP and Shampoo}
An important motivation behind SOAP is the equivalence between the idealized Shampoo and running idealized Adafactor \citep{shazeer2018adafactor} in the eigenbasis provided by Shampoo's preconditioners \citep{vyas2024soap}. While SOAP only replaces Adafactor with Adam in the above algorithm, we state that SOAP and Shampoo are also equivalent in their idealized forms when the whitening matrix admits a Kronecker product structure. 

In the idealized version of SOAP and Shampoo, all EMAs are replaced by expectations over the training distribution with sampled labels, except the EMA of the gradient (i.e., the first-order momentum). For brevity, we omit this component in the following analysis, as it does not affect the equivalence.

\begin{theorem}
    Idealized SOAP and idealized Shampoo are equivalent by assuming that the gradient whitening matrix $\Sigma\in\R^{mn\times mn}$ is a Kronecker product of $L\in\R^{m\times m}$ and $R\in\R^{n\times n}$.
\end{theorem}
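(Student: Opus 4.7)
The plan is to diagonalize both Kronecker factors and recognize that the rotation defining SOAP's $G'$ is precisely the unitary change of basis that diagonalizes the whitening matrix $\Sigma$. Once in that basis, Adam's diagonal preconditioner becomes exact (not an approximation), so the Adam-in-rotated-space update collapses to the full whitening update, which I will then identify with Shampoo's update from Equation~\ref{eq:shampoo_vec}.

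Concretely, I would first write the eigendecompositions $L = Q_L \Lambda_L Q_L^T$ and $R = Q_R \Lambda_R Q_R^T$, and invoke the standard Kronecker identities $(A\otimes B)(C\otimes D) = (AC)\otimes(BD)$ and $(A\otimes B)^p = A^p\otimes B^p$ to write
\begin{align}
\Sigma = (R\otimes L)/\Tr(L) = (Q_R\otimes Q_L)\,\bigl((\Lambda_R\otimes \Lambda_L)/\Tr(L)\bigr)\,(Q_R\otimes Q_L)^T,
\end{align}
so that $Q_R\otimes Q_L$ is an orthonormal eigenbasis of $\Sigma$. Next I would vectorize SOAP's rotation $G' = Q_L^T G Q_R$ using $\mathrm{vec}(AGB) = (B^T\otimes A)\mathrm{vec}(G)$, obtaining $g' = (Q_R\otimes Q_L)^T g$. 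Taking outer products and expectations, $\E[g' g'^T] = (Q_R\otimes Q_L)^T \Sigma (Q_R\otimes Q_L) = (\Lambda_R\otimes\Lambda_L)/\Tr(L)$, which is diagonal.

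This diagonality is the crux: the idealized Adam preconditioner applied to $g'$ is $\mathrm{diag}(\E[g'^2])^{1/2}$, and because $\E[g' g'^T]$ is already diagonal, this equals $\bigl((\Lambda_R\otimes\Lambda_L)/\Tr(L)\bigr)^{1/2}$ exactly, with no uncorrelated-coordinates assumption to discard. Hence the Adam-adapted vectorized update in the rotated space is $n' = \bigl((\Lambda_R\otimes\Lambda_L)/\Tr(L)\bigr)^{-1/2} g'$. Rotating back via $n = (Q_R\otimes Q_L)\, n'$ and collecting terms using the Kronecker identities yields
\begin{align}
n = (Q_R\otimes Q_L)\bigl((\Lambda_R\otimes\Lambda_L)/\Tr(L)\bigr)^{-1/2}(Q_R\otimes Q_L)^T g = \bigl((R\otimes L)/\Tr(L)\bigr)^{-1/2} g,
\end{align}
which is exactly $H_{\text{Shampoo}}^{-1/2} g$ from Equation~\ref{eq:shampoo_vec}. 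Devectorizing recovers the matrix-form Shampoo update in Equation~\ref{eq:shampoo}, completing the equivalence.

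The only mildly subtle point, and the place I expect to do the most careful bookkeeping, is making sure the column-major vectorization convention, the ordering $Q_R\otimes Q_L$ (rather than $Q_L\otimes Q_R$), and the $\Tr(L)$ normalization factor all line up consistently between the SOAP side and the Shampoo side; nothing is conceptually hard, but a sign, transpose, or Kronecker-order slip would silently break the identification, so I would double-check each step against both the scalar identity $\mathrm{vec}(AGB) = (B^T\otimes A)\mathrm{vec}(G)$ and a small $2\times 2$ sanity example before writing the final chain of equalities.
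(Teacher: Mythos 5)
Your proposal is correct and follows essentially the same route as the paper's own proof: eigendecompose $L$ and $R$, vectorize the SOAP rotation to get $g' = (Q_R\otimes Q_L)^T g$, observe that $\E[g'g'^T] = (\Lambda_R\otimes\Lambda_L)/\Tr(L)$ is diagonal so the idealized Adam preconditioner is exact, and rotate back to recover $((R\otimes L)/\Tr(L))^{-1/2}g$, matching Shampoo's vectorized update. Your explicit remark that the diagonality makes Adam's diagonal approximation lossless is a slightly clearer articulation of the crux than the paper gives, but the argument is the same.
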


\begin{proof}
    Recall that the whitening matrix is defined as the covariance matrix of vectorized gradients. From Equation \ref{eq:kronecker}, we have $\Sigma=\E[gg^T]=(R\otimes L)/\text{Trace}(L)$. By performing eigendecomposition on Shampoo's preconditioners, we can express $L$ and $R$ as
    \begin{align*}
        L = Q_L\Lambda_L Q_L^T, R = Q_R\Lambda_R Q_R^T.
    \end{align*}
    Let $g'\in\R^{mn}$ denote the vectorized rotated gradient $G'=Q_L^T G Q_R$ at the first step of SOAP, $g'=\text{vec}(G') = (Q_R^T \otimes Q_L^T)g$. Similarly, we can express the remaining vectorized update steps in Equation~\ref{eq:soap} as $n'=\text{vec}(N')=diag(\E[g'g'^T])^{-1/2}g'$, and $n=\text{vec}(N) = (Q_R\otimes Q_L)n'$, where $n$ is the final vectorized update direction of SOAP.
    
    We first show that the covariance matrix of the rotated gradients is diagonal.
    \begin{align*}
        \E[g'g'^T] &= (Q_R^T\otimes Q_L^T)\E[gg^T](Q_R\otimes Q_L)\\
        &= (Q_R^T\otimes Q_L^T)((R\otimes L)/\text{Trace}(L))(Q_R\otimes Q_L)\\
        &= (Q_R^T R Q_R)\otimes (Q_L^T LQ_L)/\text{Trace}(L)\\
        &= \Lambda_R \otimes \Lambda_L/\text{Trace}(L) \quad \quad \quad \quad \quad \text{(diagonal since $\Lambda_R$ and $\Lambda_L$ are both diagonal)}
    \end{align*}
    It follows that $n'=(\E[g'g'^T])^{-1/2}g'=(\Lambda_R \otimes \Lambda_L/\text{Trace}(L))^{-1/2}g'$, and the final update direction of SOAP becomes $n=(Q_R\otimes Q_L) (\Lambda_R \otimes \Lambda_L/\text{Trace}(L))^{-1/2} (Q_R^T \otimes Q_L^T)g$.

    Now we rewrite the Shampoo update direction in Equation~\ref{eq:shampoo_vec} to prove their equivalence.
    \begin{align*}
        ((R\otimes L)/\text{Trace}(L))^{-\frac{1}{2}} g &= (((Q_R\Lambda_R Q_R^T)\otimes (Q_L\Lambda_L Q_L^T)/\text{Trace}(L))^{-\frac{1}{2}} g\\
        &= ((Q_R\otimes Q_L)(\Lambda_R\otimes \Lambda_L)(Q_R^T\otimes Q_L^T)/\text{Trace}(L))^{-\frac{1}{2}}g\\
        &= (Q_R\otimes Q_L)(\Lambda_R\otimes \Lambda_L/\text{Trace}(L))^{-\frac{1}{2}}(Q_R^T\otimes Q_L^T))g 
    \end{align*}
\end{proof}

While the idealized forms of these two algorithms are equivalent under certain assumptions, their practical performance differs due to the modifications discussed in the Shampoo part of Section~\ref{sec:approx}. In addition, differences in training data distribution, model architectures, and efficiency-related implementation trade-offs contribute further to the discrepancy between SOAP and Shampoo.

\section{Experiments}

We reproduced Adam, Shampoo and SOAP respectively, on a vision task (grayscale image colorization) and a language task (nanoGPT \cite{Karpathy2022}). We measured the convergence rate and the final loss achieved of the 3 optimizers, and the effect of precondition frequency on Shampoo and SOAP.

We used a modified ResNet and CNN architecture for the image colorization task \cite{colorization}. Our model has roughly 12 million parameters. We trained our model using MSE loss, on 4258 images selected from the MIT Places 365 dataset \cite{places}. We converted these images to Lab format and trained our model to predict the ab channel using the L channel.

We used nanoGPT for the language task, which employs a Transformer architecture consisting of 6 layers, 6 attention heads, a 384-dimensional embedding, and a context window of 256 characters, resulting in approximately 10.6 million parameters. The model is trained on the standard "tiny-Shakespeare" dataset, which contains approximately 1.1 million characters and a vocabulary size of 65. This corpus is divided into training and validation sets with a 90\%/10\% split. We used the standard cross-entropy loss (next-token prediction).

Training loss converges after one epoch for all runs in this experiment. We report the validation loss in the following sections.

\subsection{Convergence rate} 

\begin{figure}[t]
  \centering
  \includegraphics[width=0.6\textwidth]{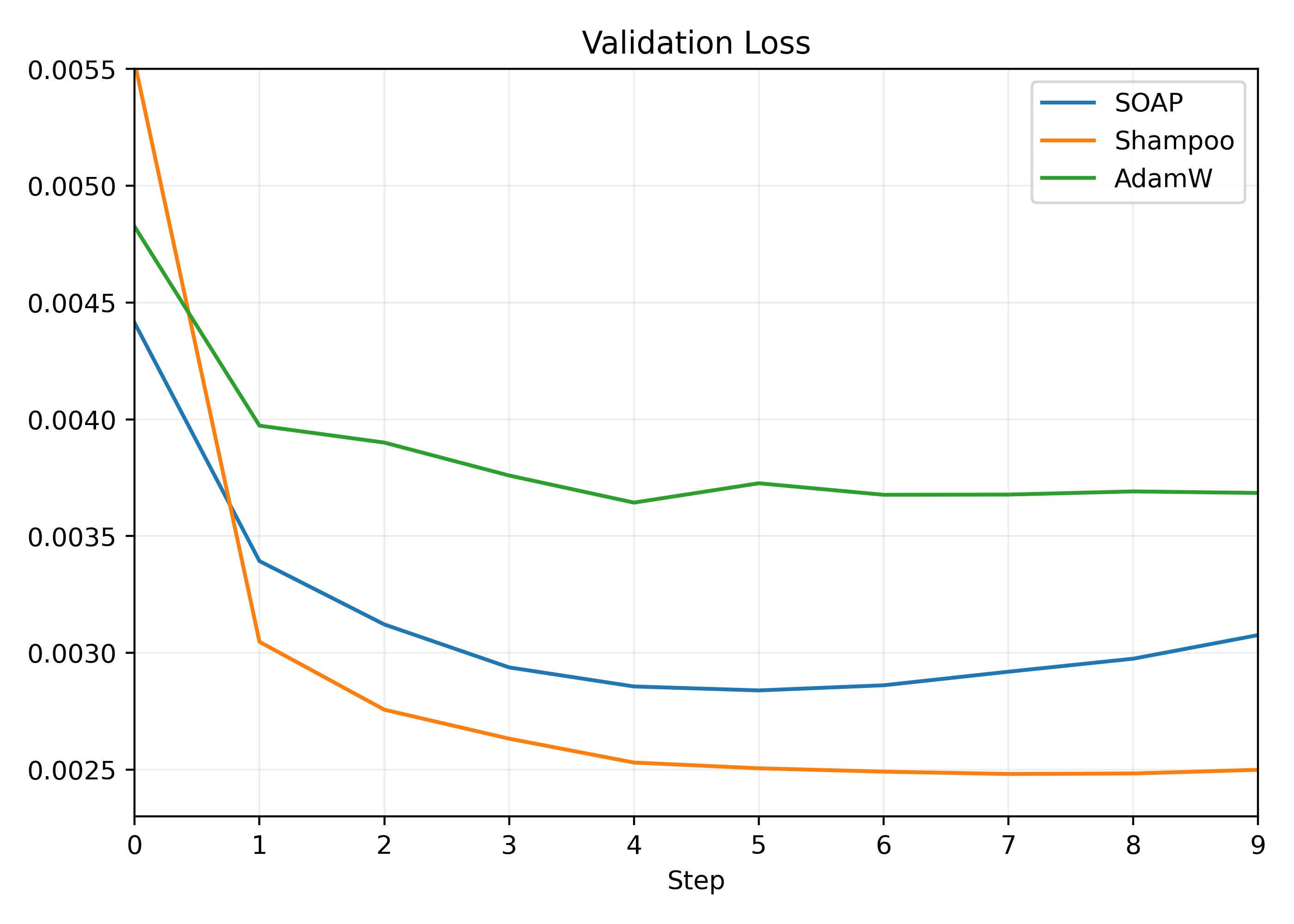} 
  \caption{Validation loss curves for Shampoo, SOAP, and AdamW optimizers on the image colorization task. Shampoo achieves the fastest convergence, while SOAP and AdamW converge more slowly and plateau at higher loss values. This highlights the advantage of second-order optimization in this setting.}
  \label{fig:convergence}
\end{figure}

In image colorization, as shown in Figure~\ref{fig:convergence} Shampoo appears to converge faster despite having a theoretical edge on SOAP. Adam, however, has the slowest convergence, in accordance with theory.

\begin{figure}[t] \label{nano_out}
    \centering  
    \includegraphics[width=\linewidth]{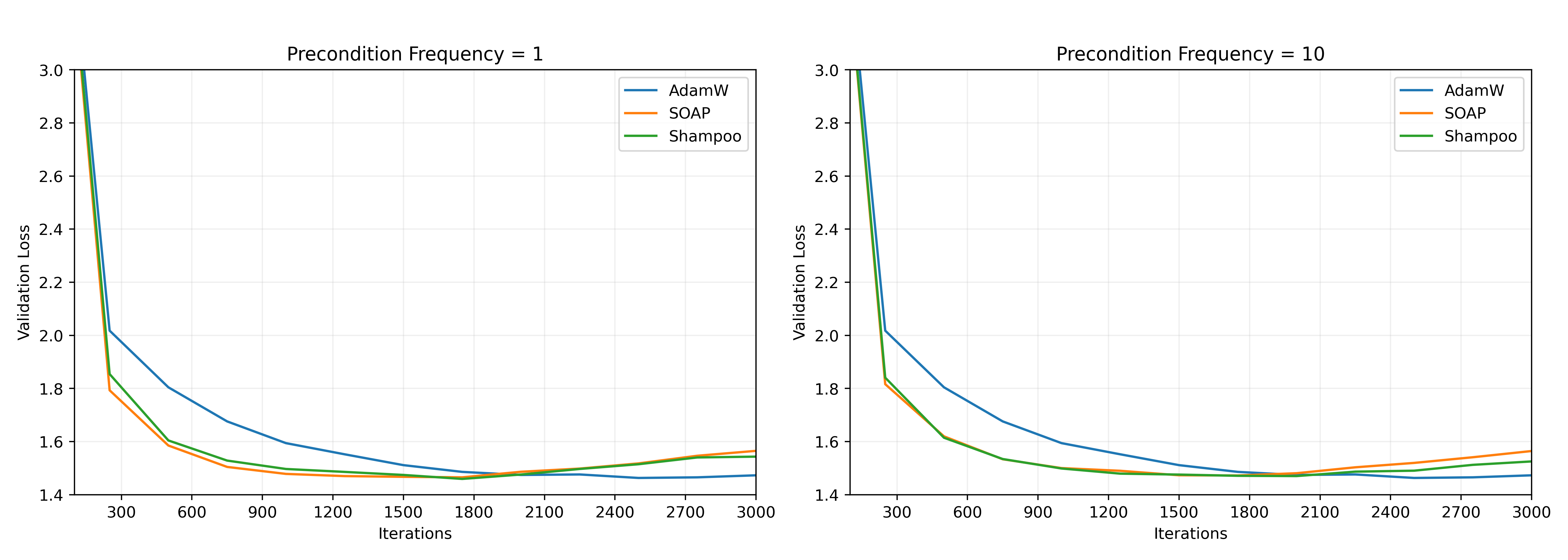}
    \caption{Validation loss for nanoGPT with different precondition frequencies}
    \label{fig:nano_out}
\end{figure}

In terms of the autoregressive language‑modeling task, we trained a character-level GPT (nanoGPT) on the tiny Shakespeare dataset with the three different optimizers. Figure~\ref{fig:nano_out} shows the validation loss during training process. AdamW consistently has the slowest convergence. In contrast, SOAP and Shampoo have almost identical convergence rates, both faster than AdamW, regardless of precondition frequency value.

\subsection{Final Loss}

\begin{figure}[t]
  \centering
  \includegraphics[width=0.6\textwidth]{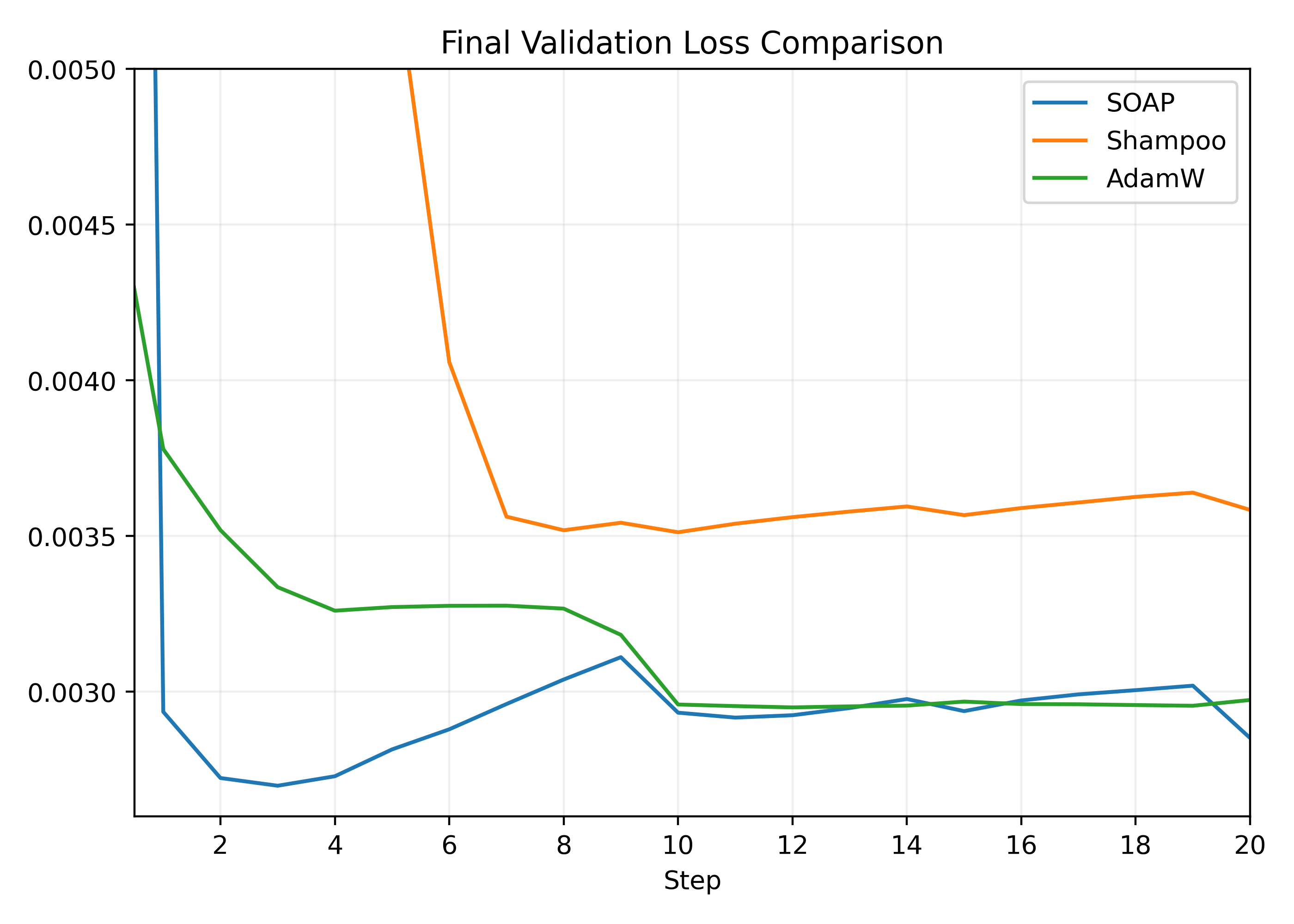} 
  \caption{Validation loss curves for Shampoo, AdamW, and SOAP optimizers on image colorization. Shampoo converges rapidly in early epochs but plateaus at a higher final loss, while AdamW and SOAP achieve lower final losses, with SOAP closely matching AdamW’s performance. All optimizers show loss stabilization by epoch 10.}
  \label{fig:final_loss}
\end{figure}

We used pixel-level MSE loss in image colorization. The loss converges after epoch 10. We trained to epoch 20 to verify convergence. Adam and SOAP both achieved a final pixel-level MSE loss of 0.0029, while Shampoo achieved a final loss of 0.0035. In this case, despite faster initial convergence, Shampoo failed to recover Adam's final loss, while SOAP reproduced it.

In the nanoGPT task, AdamW achieved a final validation loss of 1.4632. SOAP obtained final losses of 1.4654 and 1.4724 with precondition frequencies of 1 and 10, respectively, whereas Shampoo achieved losses of 1.4595 and 1.4703 at the same frequency. These results indicate that although Shampoo produced the best performance when the precondition frequency was set to 1, the overall differences in loss among the three optimization methods were relatively minor.

\subsection{Precondition Frequency}

\begin{figure}[t]
  \centering
  \includegraphics[width=0.6\textwidth]{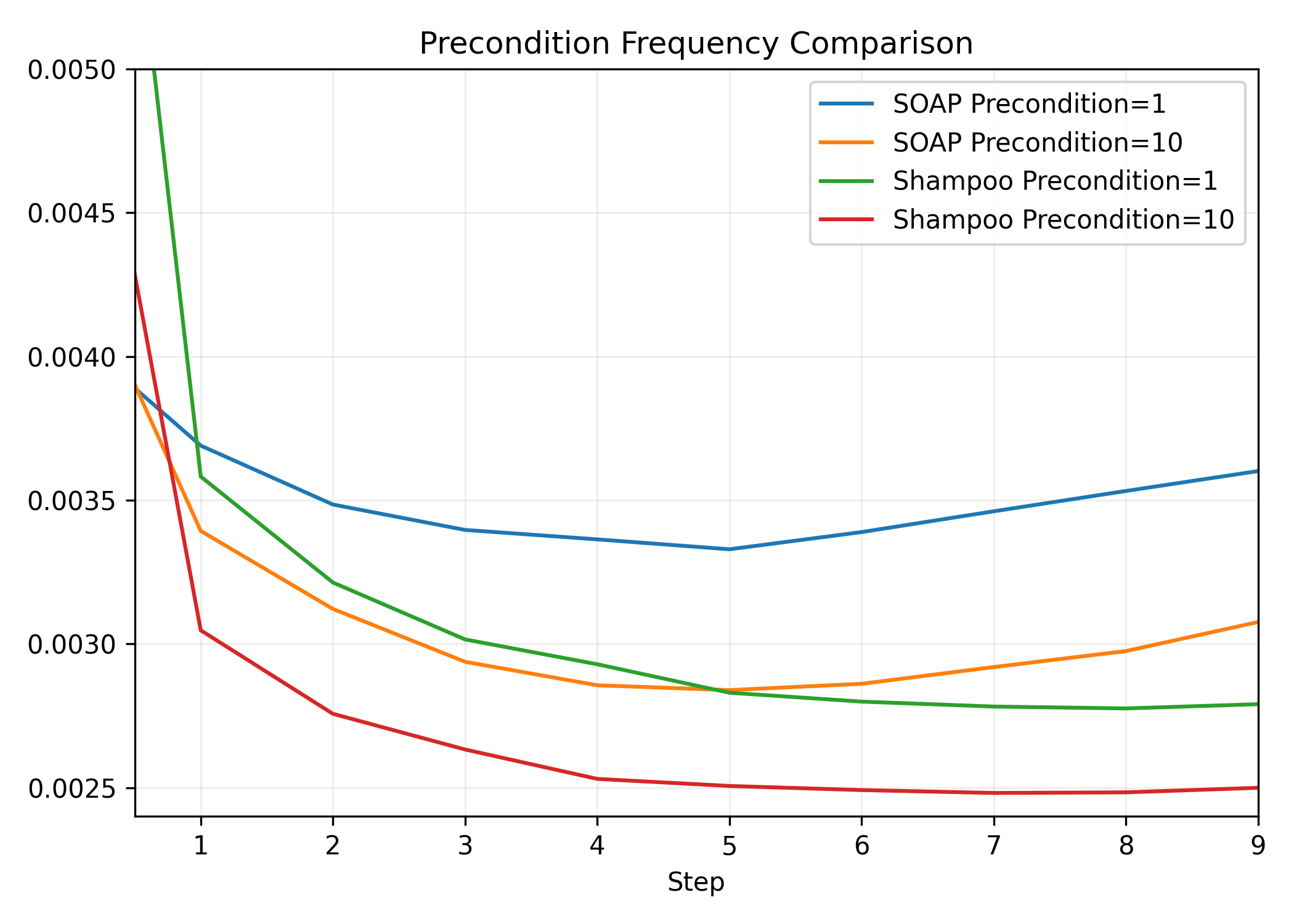} 
  \caption{Validation loss curves for Shampoo and SOAP optimizers on image colorization, with different preconditioner update frequencies. Updating the preconditioner every 10 steps leads to lower final loss and greater stability, especially for Shampoo, which outperforms SOAP across both frequencies.}
  \label{fig:precondition}
\end{figure}

Figure~\ref{fig:precondition} shows the comparison between different preconditioning frequency for Shampoo and SOAP on image colorization. We did not observe faster convergence with lower precondition frequency, possibly because noisy gradients lead to unstable curvature estimates, while the true curvature changes slowly and is already well-captured by EMA smoothing.

For nanoGPT, varying the precondition frequency did not result in significant differences. As shown in Figure~\ref{fig:nano_out}, the convergence rate improved only slightly at lower precondition frequency.

\section{Conclusion}
In this work, we examine SOAP as an optimization algorithm that leverages second-order information through whitening-based preconditioning. By analyzing SOAP, Shampoo, and Adam through the lens of gradient whitening, we show how each method approximates the whitening matrix under different structural assumptions. We highlight a theoretical equivalence between idealized SOAP and Shampoo when the whitening matrix has a Kronecker product structure, providing a deeper understanding of SOAP's design. Our empirical studies, conducted on both language modeling and image colorization tasks, demonstrate that while SOAP and Shampoo perform comparably, they do not present consistent advantages over Adam. These findings emphasize the broad potential of gradient whitening as a principled foundation for adaptive optimization, and also indicate that the effectiveness of second-order preconditioning may vary depending on the problem domain and model architecture.

\bibliography{iclr2025_conference}
\bibliographystyle{iclr2025_conference}


\end{document}